\newcommand{\OPRAm}[0]{\ensuremath{\mathcal{OPRA}_m}}
\newcommand{\QS}{$\mathcal{QS}$\xspace}
\newcommand{\Pred}[1]{ {\footnotesize \texttt{#1}} }
\newcommand{\Const}[1]{\operatorname{#1}}
\newcommand{\mysubsecNoBook}[2]{\noindent\textbf{#1}\quad}
\let\c@table\c@figure
\newcommand{\bul}{$\triangleright~$}
\definecolor{DarkBlue}{RGB}{7,72,110}
\definecolor{LightBlue}{RGB}{205,237,249}
\definecolor{LightRed}{RGB}{255,102,102}
\definecolor{DarkRed}{RGB}{153,0,0}
\definecolor{LightGreen}{RGB}{178,205,102}
\definecolor{DarkGreen}{RGB}{0,51,0}
\newcommand{\keywords}[1]{\par\addvspace\baselineskip
\noindent\keywordname\enspace\ignorespaces#1}
\begin{document}

\mainmatter  

\title{ASPMT(QS): Non-Monotonic Spatial Reasoning With \\ Answer Set Programming Modulo Theories}

\titlerunning{ASPMT(QS): Non-Monotonic Spatial Reasoning}

%
%
\author{ Przemys\l{}aw Andrzej Wa{\l}\k{e}ga\inst{1} \and Mehul Bhatt\inst{2} \and Carl Schultz\inst{2} 
}
\authorrunning{P. Wa{\l}\k{e}ga, M. Bhatt, C. Schultz}

\institute{ University of Warsaw, Institute of Philosophy, Poland,
\and
University of Bremen, Department of Computer Science, Germany.}

%
%

\toctitle{ASPMT(QS): Non-monotonic spatial reasoning}
\tocauthor{Authors' Instructions}
\maketitle

\begin{abstract}
The systematic modelling of \emph{dynamic spatial systems} \cite{bhatt:scc:08} is a key requirement in a wide range of application areas such as comonsense cognitive robotics, computer-aided architecture design, dynamic geographic information systems. We present ASPMT(QS), a novel approach and fully-implemented prototype for non-monotonic spatial reasoning ---a crucial requirement within dynamic spatial systems-- based on Answer Set Programming Modulo Theories (ASPMT). ASPMT(QS) consists of a (qualitative) spatial representation module (QS) and a method for turning tight ASPMT instances into Sat Modulo Theories (SMT) instances in order to compute stable models by means of SMT solvers. We formalise and implement concepts of  default spatial reasoning and spatial frame axioms using choice formulas. Spatial reasoning is performed by encoding spatial relations as systems of polynomial constraints, and solving via SMT with the theory of real nonlinear arithmetic. We empirically evaluate ASPMT(QS) in comparison with other prominent contemporary spatial reasoning systems. Our results show that ASPMT(QS) is the only existing system that is capable of reasoning about indirect spatial effects (i.e. addressing the ramification problem), and integrating geometric and qualitative spatial information within a non-monotonic spatial reasoning context.

\keywords{Non-monotonic Spatial Reasoning, Answer Set Programming Modulo Theories, Declarative Spatial Reasoning}
\end{abstract}

\section{Introduction}
Non-monotonicity is characteristic of commonsense reasoning patterns concerned with, for instance, making default assumptions (e.g., about spatial inertia), counterfactual reasoning with hypotheticals (e.g., what-if scenarios), knowledge interpolation, explanation \& diagnosis (e.g., filling the gaps, causal links), belief revision. Such reasoning patterns, and therefore non-monotonicity, acquires a special significance in the context of \emph{spatio-temporal dynamics}, or computational commonsense \emph{reasoning about space, actions, and change} as applicable within areas as disparate as geospatial dynamics, computer-aided design,  cognitive vision, commonsense cognitive robotics  \cite{Bhatt:RSAC:2012}. Dynamic spatial systems are characterised by scenarios where spatial configurations of objects undergo a change as the result of interactions within a physical environment \cite{bhatt:scc:08}; this requires explicitly identifying and formalising relevant actions and events at both an ontological and (qualitative and geometric) spatial level, e.g. formalising \emph{desertification} and \emph{population displacement} based on spatial theories about \emph{appearance, disappearance, splitting, motion}, and \emph{growth} of regions \cite{bhatt2014geospatial}. This calls for a deep integration of spatial reasoning within KR-based non-monotonic reasoning frameworks \cite{bhatt2011-scc-trends}.


We select aspects of a theory of \emph{dynamic spatial systems} ---pertaining to \emph{(spatial) inertia, ramifications, causal explanation}--- that are inherent  to a broad category of dynamic spatio-temporal phenomena, and require non-monotonic reasoning \cite{bhatt:scc:08,bhatt:aaai08}. For these aspects, we provide an operational semantics and a computational framework for realising fundamental non-monotonic spatial reasoning capabilities based on Answer Set Programming Modulo Theories \cite{bartholomew2013functional}; ASPMT is extended to the qualitative spatial (QS) domain resulting in the non-monotonic spatial reasoning system ASPMT(QS). Spatial reasoning is performed in an analytic manner (e.g. as with reasoners such as CLP(QS) \cite{bhatt2011clp}), where spatial relations are encoded as systems of polynomial constraints; the task of determining whether a spatial graph $G$ is consistent is now equivalent to determining whether the system of polynomial constraints is satisfiable, i.e. Satisfiability Modulo Theories (SMT) with real nonlinear arithmetic, and can be accomplished in a sound and complete manner. Thus, ASPMT(QS) consists of a (qualitative) spatial representation module and a method for turning tight ASPMT instances into Sat Modulo Theories (SMT) instances in order to compute stable models by means of SMT solvers. 

In the following sections we present the relevant foundations of stable model semantics and ASPMT, and then extend this to ASPMT(QS) by 
defining a (qualitative) spatial representations module, and formalising spatial default reasoning and spatial frame axioms using choice formulas. We empirically evaluate ASPMT(QS) in comparison with other existing spatial reasoning systems. We conclude that ASMPT(QS) is the only system, to the best of our knowledge, that operationalises dynamic spatial reasoning within a KR-based framework.

\section{Preliminaries} \label{sec::stable_models}

\subsection{Bartholomew -- Lee Stable Models Semantics}

We adopt a definition of stable models based on syntactic transformations \cite{bartholomew2012stable}  which is a generalization of the previous definitions from \cite{ferraris2011stable} \cite{gelfond1988stable} and \cite{ferraris2005answer}. For predicate symbols (constants or variables) $u$ and $c$, expression $u \leq c$ is defined as shorthand for $\forall \textbf{x}(u(\textbf{x}) \to c(\textbf{x}))$. Expression $u = c$ is defined as $\forall \textbf{x}(u(\textbf{x}) \equiv c(\textbf{x}))$ if $u$ and $c$ are predicate symbols, and $\forall \textbf{x}(u(\textbf{x}) = c(\textbf{x}))$ if they are function symbols. For lists of symbols $\textbf{u} = (u_1, \dots , u_n )$ and $\textbf{c} = (c_1, \dots , c_n )$, expression $\textbf{u} \leq \textbf{c}$ is defined as $(u_1 \leq c_1) \land \dots \land (u_n \leq c_n )$, and similarly, expression $\textbf{u} = \textbf{c}$ is defined as $(u_1 = c_1) \land \dots \land (u_n = c_n )$. Let $\textbf{c}$ be a list of distinct predicate and function constants, and let $\widehat{\textbf{c}}$ be a list of distinct predicate and function variables corresponding to c. By $\textbf{c}^{pred}$ ($\textbf{c}^{func}$ , respectively) we mean the list of all predicate constants (function constants, respectively) in $\textbf{c}$, and by $\widehat{\textbf{c}}^{pred}$ ($\widehat{\textbf{c}}^{func}$ , respectively) the list of the corresponding predicate variables (function variables, respectively) in $\widehat{\textbf{c}}$. In what follows, we refer to function constants and predicate constants of arity $0$ as object constants and propositional constants, respectively.

\begin{definition}[\textbf{Stable model operator $\textbf{SM}$}] \label{def::stable_operator}
For any formula $F$ and any list of predicate and function constants $\textbf{c}$ (called intensional constants), $\textbf{SM}[F;\textbf{c}]$ is defined as
\begin{equation}
F \land \neg \exists \widehat{\textbf{c}} ( \widehat{\textbf{c}} < \textbf{c} \land F^*(\widehat{\textbf{c}}) ),
\end{equation}
where $\widehat{\textbf{c}} < \textbf{c}$ is a shorthand for $(\widehat{\textbf{c}}^{pred} \leq \textbf{c}^{pred}) \land \neg (\widehat{\textbf{c}} = \textbf{c})$ and $F^*(\widehat{\textbf{c}})$ is defined recursively as follows:
\begin{itemize}
\item for atomic formula $F$, $F^* \equiv F' \land F$, where $F'$ is obtained from $F$ by replacing all intensional constants $\textbf{c}$ with corresponding variables from $\widehat{\textbf{c}}$,
\item $(G \land H)^* = G^* \land H^*$, \ \ \ $(G \lor H)^* = G^* \lor H^*$,	   
\item $(G \to H)^* = (G^* \to H^*) \land (G \to H)$,
\item $(\forall x G)^* = \forall x G^*$, \ \ \ $(\exists x G)^* = \exists x G^*$.	 
\end{itemize}
$\neg F$ is a shorthand for $F \to \bot$, $\top$ for $\neg \bot$ and $F \equiv G$ for $(F \to G) \land (G \to F)$.
\end{definition}

\begin{definition}[\textbf{Stable model}] \label{def::stable_model}
For any sentence $F$, a stable model of $F$ on $\textbf{c}$ is an interpretation $I$ of underlying signature such that $I \models \textbf{SM}[F;\textbf{c}]$.
\end{definition}

\subsection{Turning ASPMT into SMT}

It is shown in \cite{bartholomew2013functional} that a tight part of ASPMT instances can be turned into SMT instances and, as a result, off-the-shelf SMT solvers (e.g. \textsc{Z3} for arithmetic over reals) may be used to compute stable models of ASP, based on the notions of Clark normal form, Clark completion.


\begin{definition}[\textbf{Clark normal form}] \label{def::clark}
Formula $F$ is in \emph{Clark normal form} (relative to the list $\textbf{c}$ of intensional constants) if it is a conjunction of sentences of the form (\ref{eqn::clark1}) and (\ref{eqn::clark2}).
\vspace{+8pt}
\begin{minipage}{0.51\linewidth}
\begin{equation} \label{eqn::clark1}
\forall \textbf{x} (G \to p(\textbf{x}))
\end{equation}
\end{minipage}
\begin{minipage}{0.48\linewidth}
\begin{equation} \label{eqn::clark2}
\forall \textbf{x}y (G \to f(\textbf{x})=y)
\end{equation}
\end{minipage}

\noindent one for each intensional predicate $p$ and each intensional function $f$, where $\textbf{x}$ is a list of distinct object variables, $y$ is an object variable, and $G$ is an arbitrary formula that has no free variables other than those in $\textbf{x}$ and $y$.
\end{definition}

\begin{definition}[\textbf{Clark completion}] \label{def::completion}
The \emph{completion} of a formula $F$ in Clark normal form (relative to $\textbf{c}$), denoted by $Comp_{\textbf{c}}[F]$ is obtained from  $F$ by replacing each conjunctive term of the form (\ref{eqn::clark1}) and (\ref{eqn::clark2}) with (\ref{eqn::completion1}) and (\ref{eqn::completion2}) respectively

\vspace{+8pt}
\begin{minipage}{0.47\linewidth}
\begin{equation} \label{eqn::completion1}
\forall \textbf{x} (G \equiv p(\textbf{x}))
\end{equation}
\end{minipage}
\begin{minipage}{0.48\linewidth}
\begin{equation} \label{eqn::completion2}
\forall \textbf{x}y (G \equiv f(\textbf{x})=y).
\end{equation}
\end{minipage}
\end{definition}

\begin{definition}[\textbf{Dependency graph}] \label{def::dependency}
The \emph{dependency graph} of a formula $F$ (relative to $\textbf{c}$) is a directed graph $DG_{\textbf{c}}[F]=(V,E)$ such that:
\begin{enumerate}
\item $V$ consists of members of $\textbf{c}$,
\item for each $c,d \in V$, $(c,d) \in E$ whenever there exists a strictly positive occurrence of $G\to H$ in $F$, such that $c$ has a strictly positive occurrence in $H$ and $d$ has a strictly positive occurrence in $G$,
\end{enumerate}
where an occurrence of a symbol or a subformula in $F$ is called strictly positive in $F$ if that occurrence is not in the antecedent of any implication in $F$.
\end{definition}

\begin{definition}[\textbf{tight formula}] \label{def::tight}
Formula $F$ is \emph{tight} (on $\textbf{c}$) if $DG_{\textbf{c}}[F]$ is acyclic.
\end{definition}

\begin{theorem}[Bartholomew, Lee] \label{the::smt}
For any sentence $F$ in Clark normal form that is tight on $\textbf{c}$, an interpretation $I$ that satisfies $\exists xy (x = y)$ is a model of $\textbf{SM}[F;\textbf{c}]$ iff $I$ is a model of $Comp_{\textbf{c}}[F]$ relative to $\textbf{c}$.
\end{theorem}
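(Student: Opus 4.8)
The plan is to prove the two implications separately, first noting that both $\textbf{SM}[F;\textbf{c}]$ and $Comp_{\textbf{c}}[F]$ entail $F$ itself: the former through the leftmost conjunct of Definition~\ref{def::stable_operator}, the latter because each completion equivalence (\ref{eqn::completion1})--(\ref{eqn::completion2}) implies the corresponding Clark-form implication (\ref{eqn::clark1})--(\ref{eqn::clark2}). Hence I may assume throughout that $I\models F$ and only compare the ``supportedness'' information that each side adds. The workhorse is the elementary monotonicity fact that $H^*\models H$ for \emph{every} formula $H$: this follows by a one-line structural induction, since for atoms $H^*\equiv H'\wedge H$ carries $H$ as a conjunct, and the implication clause $(G\to K)^*=(G^*\to K^*)\wedge(G\to K)$ again carries the unstarred implication. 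Equivalently, whenever $H$ is false in $I$, its star $H^*$ is false in $(I,J)$ for every interpretation $J$ of the hatted constants $\widehat{\textbf{c}}$.

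For the direction $\textbf{SM}[F;\textbf{c}]\Rightarrow Comp_{\textbf{c}}[F]$ (which needs no tightness) I argue by contraposition on a single failing instance of completion. Since the $\Leftarrow$ halves already hold in $F$, any failure is a ``$\Rightarrow$'' failure: either an intensional atom $p(\textbf{a})$ holds in $I$ while its body $G$ is false at $\textbf{a}$, or a value $f(\textbf{a})=b$ holds while $G$ is false at $(\textbf{a},b)$. In the predicate case I let $J$ agree with $I$ everywhere except $\widehat{p}:=I(p)\setminus\{\textbf{a}\}$; in the function case I keep all predicates fixed and redefine $\widehat{f}$ at $\textbf{a}$ so that $\widehat{\textbf{c}}\neq\textbf{c}$ (here the hypothesis $\exists xy(x=y)$, i.e.\ a nonempty domain, secures the well-defined values being reasoned about). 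In both cases $\widehat{\textbf{c}}<\textbf{c}$ holds, and a conjunct-by-conjunct check shows $(I,J)\models F^*$: the unstarred clauses hold because $I\models F$, while the single starred clause that would otherwise demand the deleted atom or value is discharged using falsity of the offending body together with $G^*\models G$. This contradicts the second conjunct of $\textbf{SM}[F;\textbf{c}]$, so every completion equivalence holds.

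The substantial direction is $Comp_{\textbf{c}}[F]\Rightarrow\textbf{SM}[F;\textbf{c}]$, and this is where tightness enters. Assuming $I\models Comp_{\textbf{c}}[F]$, suppose for contradiction that some $J$ witnesses $\widehat{\textbf{c}}<\textbf{c}$ and $(I,J)\models F^*$. Since $DG_{\textbf{c}}[F]$ is acyclic (Definition~\ref{def::tight}), I fix a rank function assigning to each intensional constant a natural number strictly exceeding the ranks of all constants it depends on, i.e.\ of all constants occurring strictly positively in the body $G$ of its defining clause (Definition~\ref{def::dependency}). I then prove by strong induction on this rank that $J$ and $I$ agree on every intensional constant, contradicting $\neg(\widehat{\textbf{c}}=\textbf{c})$. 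The lemma driving the induction is that $\widehat{d}$ occurs in $G^*$ only at the strictly positive occurrences of $d$ in $G$, so that if $J$ and $I$ agree on all constants occurring strictly positively in $G$, then $(I,J)\models G^*$ iff $I\models G$. Granting this, for a predicate $p$ of rank $k$: the inclusion $\widehat{p}\le p$ is free, and conversely $I\models p(\textbf{a})$ gives $I\models G(\textbf{a})$ by completion, whence $(I,J)\models G^*(\textbf{a})$ by the lemma (all positively occurring body constants have rank $<k$ and agree by induction), whence the starred clause forces $\widehat{p}(\textbf{a})$. The function case is analogous, reading the equivalence (\ref{eqn::completion2}) at the value $b=I(f)(\textbf{a})$ and concluding $\widehat{f}(\textbf{a})=b$.

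The main obstacle I anticipate is precisely the bookkeeping behind that lemma: making rigorous that starring replaces an intensional constant by its hatted copy exactly at strictly positive occurrences, while negative occurrences (inside antecedents of implications) are evaluated against the original $\textbf{c}$. This is what converts the graph-theoretic acyclicity of Definition~\ref{def::dependency} into a usable induction order, and it is the only place where restricting to \emph{strictly positive} dependencies, rather than all occurrences, is essential. The function-valued case is the secondary difficulty, since there is no subset ordering to exploit and one must instead argue pointwise equality of $\widehat{f}$ and $f$ using totality over a nonempty domain, which is exactly the role played by the hypothesis $\exists xy(x=y)$.
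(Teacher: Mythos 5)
The paper does not actually prove this statement: it is imported verbatim (as Theorem~\ref{the::smt}) from Bartholomew and Lee's work on functional stable model semantics, so there is no in-paper proof to compare against. Your reconstruction follows what is essentially the standard argument for that result: both sides entail $F$; stability implies supportedness (the extra ``$\Rightarrow$'' half of each completion equivalence) by shrinking the offending atom or perturbing the offending function value to obtain a witness $\widehat{\textbf{c}}<\textbf{c}$ for $F^*$; and, conversely, supportedness plus acyclicity of $DG_{\textbf{c}}[F]$ lets one show by induction along a topological order that any $J$ with $(I,J)\models F^*$ must agree with $I$ on every intensional constant. That decomposition, and the key lemma relating $(I,J)\models G^*$ to $I\models G$ under agreement on the strictly positive constants of $G$, are the right ingredients, and your lemma is in fact provable (the $\Leftarrow$ half is just $H^*\models H$; the $\Rightarrow$ half goes by structural induction, with the implication case discharged because a starred antecedent $A^*$ entails $A$). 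One caveat: your stated justification --- that $\widehat{d}$ occurs in $G^*$ \emph{only} at strictly positive occurrences of $d$ --- is literally false, since starring an atom inside an antecedent still introduces $\widehat{d}$ there; those occurrences are harmless only because they sit in antecedents that get \emph{weakened}, which is exactly the monotonicity argument, not an absence of occurrences. You flag this as the main bookkeeping obstacle, which is fair, but the fix is the argument just sketched rather than the syntactic claim you made.

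The one step that would genuinely fail as written is your handling of the side condition. You read $\exists xy(x=y)$ as mere nonemptiness of the domain, but nonemptiness does not let you ``redefine $\widehat{f}$ at $\textbf{a}$ so that $\widehat{\textbf{c}}\neq\textbf{c}$'': for that you need a \emph{second} domain element $b'\neq I(f)(\textbf{a})$ of the appropriate sort. Over a one-element universe the supportedness of an intensional function cannot be forced by stability (there is no smaller/different $J$ to rule out), and the left-to-right direction of the theorem is actually false there. The condition in the original Bartholomew--Lee theorem is $\exists xy(x\neq y)$ --- at least two elements --- and the statement reproduced in this paper appears to contain a typo. So your proof is correct in outline, but you should replace the hypothesis you are using with $\exists xy(x\neq y)$ and invoke it precisely at the point where you perturb $\widehat{f}(\textbf{a})$; as it stands, that step is not licensed by the hypothesis you cite.
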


\section{ASPMT with Qualitative Space -- ASPMT(QS)} \label{sec::aspmt(qs)}

In this section we present our spatial extension of ASPMT, and formalise spatial default rules and spatial frame axioms.

\subsection{The Qualitative Spatial Domain \QS}\label{sec:qs-in-spmt}
Qualitative spatial calculi can be classified into two groups: topological and positional calculi. With topological calculi such as the \emph{Region Connection Calculus} (RCC) \cite{randell1992spatial}, the primitive entities are spatially extended regions of space, and could possibly even be 4D spatio-temporal histories, e.g., for \emph{motion-pattern} analyses. Alternatively, within a dynamic domain involving translational motion, point-based abstractions with orientation calculi could suffice. Examples of orientation calculi include: the Oriented-Point Relation Algebra ({\small\OPRAm}) \cite{moratz06_opra-ecai}, the Double-Cross Calculus \cite{freksa:1992:Orient-Qual-Spat-Reas}. The qualitative spatial domain (\QS)  that we consider in the formal framework of this paper encompasses the following ontology:

\smallskip

\noindent\mysubsecNoBook{{\color{black}QS1.\quad Domain Entities in \QS}}{XX}  Domain entities in \QS include \emph{circles, triangles, points} and \emph{segments}. While our method is applicable to a wide range of 2D and 3D spatial objects and qualitative relations, for brevity and clarity we primarily focus on a 2D spatial domain. Our method is readily applicable to other 2D and 3D spatial domains and qualitative relations, for example, as defined in \cite{pesant1994quad,bouhineau1996solving,pesant1999reasoning,bouhineau1999application,bhatt2011clp,schultz-bhatt-2012,DBLP:conf/ecai/SchultzB14}.

\begin{itemize}
	\item a \emph{point} is a pair of reals $x,y$
	\item a \emph{line segment} is a pair of end points $p_1, p_2$ ($p_1 \neq p_2$)
	\item a \emph{circle} is a centre point $p$ and a real radius $r$ ($0 < r$)
	\item a \emph{triangle} is a triple of vertices (points) $p_1, p_2, p_3$ such that $p_3$ is \emph{left of} segment $p_1, p_2$.
\end{itemize}

\noindent\mysubsecNoBook{{\color{black}QS2.\quad Spatial Relations in \QS}}{XX} 
We define a range of spatial relations with the corresponding polynomial encodings. Examples of spatial relations in \QS include:

\noindent \emph{Relative Orientation.}\quad \emph{Left, right, collinear} orientation relations between \emph{points} and \emph{segments}, and \emph{parallel, perpendicular} relations between \emph{segments} \cite{leecomplexity-ecai2014}.

\smallskip

\noindent \emph{Mereotopology.}\quad \emph{Part-whole} and \emph{contact} relations between regions \cite{varzi1996parts,randell1992spatial}.

%




%

\subsection{Spatial representations in ASPMT(QS)}

%
%
%
%

Spatial representations in ASPMT(QS) are based on parametric functions and qualitative relations, defined as follows.

\begin{definition}[\textbf{Parametric function}] \label{def::parametric}
A \emph{parametric function} is an $n$--ary function $f_n:D_1 \times D_2 \times \dots \times D_n \to \mathbb{R}$ such that for any  $i \in \{ 1 \dots n\}$, $D_i$ is a type of spatial object, e.g., $Points$, $Circles$, $Polygons$, etc. 
\end{definition}

\begin{example}
Consider following parametric functions $x:Circles \to \mathbb{R}$, $y:Circles \to \mathbb{R}$, $r:Circles \to \mathbb{R}$ which return the position values $x, y$ of a circle's centre and its radius $r$, respectively. Then, circle $c \in Cirlces$ may be described by means of parametric functions as follows: $x(c)=1.23 \land y(c)=-0.13 \land r(c)=2$. 
\end{example}

\begin{definition}[\textbf{Qualitative spatial relation}] \label{def::qualitative}
A \emph{qualitative spatial relation} is an $n$-ary predicate $Q_n \subseteq D_1 \times D_2 \times \dots \times D_n$ such that for any  $i \in \{ 1 \dots n\}$, $D_i$ is a type of spatial object. For each $Q_n$ there is a corresponding formula of the form
\begin{footnotesize}
\begin{multline} \label{eqn::qualitative}
\forall d_1 \in D_1 \dots \forall d_n \in D_n \bigg( Q_n(d_1, \dots , d_n) \leftarrow
 p_1(d_1, \dots , d_n) \land \dots \land p_m(d_1, \dots , d_n) \bigg)
\end{multline}
\end{footnotesize}
where $m \in \mathbb{N}$ and for any  $i \in \{ 1 \dots n\}$, $p_i$ is a polynomial equation or inequality. 
\end{definition}

\begin{proposition}\label{prop::qual_clark}
Each qualitative spatial relation according to Definition~\ref{def::qualitative} may be represented as a tight formula in Clark normal form.
\end{proposition}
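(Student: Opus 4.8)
The plan is to verify the two defining conditions separately: first that the sentence~(\ref{eqn::qualitative}) associated with $Q_n$ already has the syntactic shape demanded by Clark normal form (Definition~\ref{def::clark}), and then that its dependency graph (Definition~\ref{def::dependency}) is acyclic, which by Definition~\ref{def::tight} yields tightness. The only intensional constant I would declare is $Q_n$ itself; the parametric functions occurring inside the polynomial constraints are treated as fixed background (non-intensional) symbols interpreted by the underlying theory of real arithmetic. Fixing that convention up front is what makes both halves of the argument go through cleanly.

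First I would rewrite the reverse implication. Reading $\leftarrow$ as ordinary implication, the sentence~(\ref{eqn::qualitative}) becomes
\[
\forall d_1 \in D_1 \dots \forall d_n \in D_n \big( (p_1 \land \dots \land p_m) \to Q_n(d_1, \dots, d_n) \big).
\]
Matching this against the schema~(\ref{eqn::clark1}), I set $\textbf{x} = (d_1, \dots, d_n)$, let $G$ be the conjunction $p_1 \land \dots \land p_m$ of polynomial equalities and inequalities, and let the head $p(\textbf{x})$ be $Q_n(\textbf{x})$. By Definition~\ref{def::qualitative} every $p_i$ mentions only the variables $d_1, \dots, d_n$, so the body $G$ has no free variables outside $\textbf{x}$, which is precisely the side condition of Definition~\ref{def::clark}. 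Since there is exactly one intensional predicate, namely $Q_n$, and no intensional functions, a single conjunctive term of type~(\ref{eqn::clark1}) suffices and the formula is in Clark normal form relative to $\textbf{c} = (Q_n)$. The typed quantifiers $\forall d_i \in D_i$ are unfolded in the usual many-sorted manner and do not disturb this matching.

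For tightness I would inspect the dependency graph $DG_{\textbf{c}}[F]$, whose vertex set is just $\{Q_n\}$. The only possible edge is the self-loop $(Q_n, Q_n)$, and by Definition~\ref{def::dependency} this would require a strictly positive occurrence of an implication in which $Q_n$ occurs strictly positively both in the consequent and in the antecedent. The only implication present is $G \to Q_n(\textbf{x})$; here $Q_n$ does occur strictly positively in the consequent, but the antecedent $G$ consists solely of polynomial constraints over the parametric functions and therefore contains no occurrence of $Q_n$ at all. Hence no such edge exists, the graph is edgeless and thus acyclic, and $F$ is tight on $(Q_n)$.

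The step needing the most care is bookkeeping rather than computation: being explicit that the body of~(\ref{eqn::qualitative}) is built exclusively from polynomial (in)equalities in the parametric functions and contains \emph{no} qualitative predicate, so that neither $Q_n$ nor any other intensional symbol recurs in the antecedent. Once the parametric functions are declared background symbols, both the Clark-normal-form matching and the absence of cycles follow directly. This is what ultimately licenses applying Theorem~\ref{the::smt}, so that stable models of the relation can be computed via its completion $Comp_{\textbf{c}}[F]$.
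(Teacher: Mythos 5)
Your proof is correct and takes the same route as the paper, which simply states that the claim follows directly from Definitions~\ref{def::clark} and \ref{def::qualitative}; you have merely spelled out the verification (matching the reverse implication of~(\ref{eqn::qualitative}) to schema~(\ref{eqn::clark1}) with $\textbf{c}=(Q_n)$, and noting the dependency graph is edgeless since the body contains only polynomial constraints). Your explicit bookkeeping about which symbols are intensional is a useful clarification but not a different argument.
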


\begin{proof}
Follows directly from Definitions~\ref{def::clark} and \ref{def::qualitative}.
\end{proof}


Thus, qualitative spatial relations belong to a part of ASPMT that may be turned into SMT instances by transforming the implications in the corresponding formulas into equivalences (Clark completion). The obtained equivalence between polynomial expressions and predicates enables us to compute relations whenever parametric information is given, and vice versa, i.e. computing possible parametric values when only the qualitative spatial relations are known.



Many relations from existing qualitative calculi may be represented in ASPMT(QS) according to Definition~\ref{def::qualitative}; our system can express the polynomial encodings presented in e.g. \cite{pesant1994quad,bouhineau1996solving,pesant1999reasoning,bouhineau1999application,bhatt2011clp}. Here we give some illustrative examples.


\begin{proposition}\label{prop::IA}
Each relation of Interval Algebra (IA) \cite{allen1983maintaining} and Rectangle Algebra (RA) \cite{guesgen1989spatial} may be defined in ASPMT(QS).
\end{proposition}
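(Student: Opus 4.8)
The plan is to exhibit, for every basic relation of each algebra, an explicit encoding of the shape required by Definition~\ref{def::qualitative}, and then to invoke Proposition~\ref{prop::qual_clark} to conclude definability in ASPMT(QS). The whole argument is essentially constructive: once the entities are fixed as spatial object types with suitable parametric functions, each Allen/rectangle relation is nothing more than a conjunction of order comparisons between endpoint coordinates, which are a special (linear) case of polynomial constraints.

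First I would fix the representation of the entities via parametric functions (Definition~\ref{def::parametric}). An interval is an object type $Intervals$ equipped with two parametric functions $s,e:Intervals \to \mathbb{R}$ returning its left and right endpoints, subject to $s(i) < e(i)$. An axis-aligned rectangle is an object type $Rectangles$ equipped with four parametric functions $x^-,x^+,y^-,y^+:Rectangles \to \mathbb{R}$ giving the minimal and maximal coordinates along each axis, with $x^-(r) < x^+(r)$ and $y^-(r) < y^+(r)$; equivalently, a rectangle is a pair of intervals, one per axis.

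Next, for Interval Algebra I would write down the thirteen basic Allen relations as conjunctions of order constraints over the four endpoints of the two intervals. For instance $before(i,j)$ is $e(i) < s(j)$, $meets(i,j)$ is $e(i) = s(j)$, $overlaps(i,j)$ is $s(i) < s(j) \land s(j) < e(i) \land e(i) < e(j)$, $during(i,j)$ is $s(j) < s(i) \land e(i) < e(j)$, and the remaining relations — the six converses together with $equals$ — are obtained analogously. Each such body is a finite conjunction of linear, hence polynomial, equalities and strict inequalities, which is exactly the form $p_1 \land \dots \land p_m$ permitted in the body of (\ref{eqn::qualitative}). Therefore every basic IA relation satisfies Definition~\ref{def::qualitative}, and by Proposition~\ref{prop::qual_clark} it is expressible as a tight formula in Clark normal form, i.e. definable in ASPMT(QS). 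For Rectangle Algebra I would use the fact that every basic RA relation is a pair $(R_x,R_y)$ of basic IA relations, the first constraining the $x$-projections and the second the $y$-projections; its encoding is then the conjunction of the endpoint constraints for $R_x$ over $x^-,x^+$ with those for $R_y$ over $y^-,y^+$, again a conjunction of linear (in)equalities, so the same appeal to Definition~\ref{def::qualitative} and Proposition~\ref{prop::qual_clark} applies to each of the $13 \times 13$ basic relations.

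The only real subtlety, rather than a genuine obstacle, will be bookkeeping: setting the strict/non-strict distinctions correctly so that the thirteen IA encodings are jointly exhaustive and pairwise disjoint, and carrying along the admissibility constraints $s < e$ (respectively $x^- < x^+$ and $y^- < y^+$). If one additionally wants the full, disjunctive relation algebra rather than only the JEPD base relations, then a disjunctive relation can be introduced by several rules of the form (\ref{eqn::qualitative}), one per disjunct, each individually tight; the union is then definable by the same completion argument.
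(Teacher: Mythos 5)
Your proposal is correct and follows essentially the same route as the paper's own (much terser) proof: encode each basic Allen relation as a conjunction of endpoint equalities and inequalities, observe these are polynomial, and obtain RA relations as componentwise IA constraints on the axis projections. Your version simply makes explicit the parametric functions, the appeal to Definition~\ref{def::qualitative} and Proposition~\ref{prop::qual_clark}, and the handling of disjunctive relations, all of which the paper leaves implicit.
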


\begin{proof}
Each IA relation may be described as a set of equalities and inequalities between interval endpoints (see Figure 1 in \cite{allen1983maintaining}), which is a conjunction of polynomial expressions. RA makes use of IA relations in 2 and 3 dimensions. Hence, each relation is a conjunction of polynomial expressions \cite{DBLP:conf/ecai/SchultzB14}.
\end{proof}

%

\begin{proposition}\label{prop::RCC-5}
Each relation of RCC--5 in the domain of convex polygons with a finite maximum number of vertices may be defined in ASPMT(QS).
\end{proposition}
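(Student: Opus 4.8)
The plan is to reduce every RCC--5 relation to the single mereological primitive of parthood together with the derived notion of overlap, and then to encode these two notions as polynomial constraints on the vertices of the polygons, exploiting convexity. Recall that the five jointly exhaustive and pairwise disjoint relations of RCC--5 are $DR, PO, PP, PPi, EQ$, and that each is expressible by the standard mereological definitions
\begin{align*}
EQ(A,B) &\equiv P(A,B) \land P(B,A), & PP(A,B) &\equiv P(A,B) \land \neg P(B,A),\\
PPi(A,B) &\equiv P(B,A) \land \neg P(A,B), & PO(A,B) &\equiv O(A,B) \land \neg P(A,B) \land \neg P(B,A),
\end{align*}
with $DR(A,B) \equiv \neg O(A,B)$. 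Hence it suffices to exhibit polynomial encodings for parthood $P$ and overlap $O$; the remaining relations then follow as Boolean combinations, where the negations $\neg P$ are obtained for free from the Clark completion (Definition~\ref{def::completion}), which turns the defining implication for $P$ into an equivalence.

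The key geometric observation I would use is that for a convex polygon $B$ with vertices $b_1, \dots, b_k$ listed counter-clockwise, a point $q$ lies in $B$ exactly when $q$ is (non-strictly) \emph{left of} each directed edge $\overline{b_i b_{i+1}}$, and each such test is a sign condition on a determinant, i.e.\ a single polynomial inequality in the coordinates of $q$ and the $b_i$. Since $A$ is the convex hull of its own vertices, $A \subseteq B$ holds iff every vertex of $A$ lies in $B$. Therefore $P(A,B)$ is captured by the finite conjunction, over all vertices $a_j$ of $A$ and all edges of $B$, of the corresponding orientation inequalities; this is precisely a body of the form required by Definition~\ref{def::qualitative}. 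The hypothesis of a finite maximum number of vertices is exactly what keeps this conjunction of bounded size and lets us treat the finitely many vertex--count combinations uniformly.

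For overlap I would characterise $O(A,B)$ by the condition that some vertex of $A$ lies in $B$, or some vertex of $B$ lies in $A$, or some edge of $A$ properly crosses some edge of $B$; segment crossing is again a conjunction of orientation (polynomial) inequalities, and crossing over all $O(k^2)$ edge pairs is a finite enumeration. I would then introduce $O$ by one rule per disjunct, all sharing the head $O(A,B)$, so that the Clark completion collapses them into a single equivalence whose body is the intended disjunction; disjointness $DR$ is the complement. Assembling $P$ and $O$ through the displayed definitions yields each of the five relations as a tight formula in Clark normal form, in the sense of Proposition~\ref{prop::qual_clark}.

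I expect the main obstacle to be the treatment of overlap, since --- unlike parthood --- it is not a single conjunction of polynomials but genuinely disjunctive (a vertex inclusion \emph{or} an edge crossing). The work lies in (i) justifying that this three-way disjunction really characterises intersection of convex polygons, including the degenerate boundary-contact cases that separate $DR$ from $PO$, and (ii) arguing that admitting such disjunctive definitions stays within the framework, either by using several rules with a common head (whose completion produces the disjunction) or by introducing auxiliary intensional predicates for the individual disjuncts. A secondary, purely bookkeeping difficulty is handling polygons whose vertex counts differ, which the finite-maximum assumption resolves by allowing a uniform case analysis over the bounded possibilities.
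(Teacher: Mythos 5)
Your overall strategy coincides with the paper's: reduce the five RCC--5 relations to parthood $P$ and overlap $O$, encode $P(A,B)$ as ``every vertex of $A$ satisfies the point-in-convex-polygon inequalities for $B$'', and use the finite bound on vertex counts to keep everything a bounded conjunction of orientation polynomials. Where you diverge is the encoding of $O$. The paper defines $O(A,B)$ by the existence of a witness point $p$ lying inside both polygons; since the body of the rule lives in SMT over the reals, the coordinates of $p$ are simply two extra existentially quantified real variables constrained by two point-in-convex-polygon tests, and this characterises interior overlap exactly. You instead propose a purely combinatorial finite disjunction: some vertex of $A$ in $B$, some vertex of $B$ in $A$, or a proper edge crossing.

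That combinatorial characterisation is where your proof has a genuine defect, beyond the ``obstacle'' you flag. If all tests are strict (interior containment, transversal crossing), the disjunction is \emph{not} complete for interior overlap: take $A$ inscribed in $B$, i.e.\ every vertex of $A$ on the boundary of $B$ (the case $EQ(A,B)$ is the extreme instance). Then no vertex of either polygon is strictly interior to the other and no edges cross properly, yet the interiors overlap, so your $O$ is falsely refuted and $DR$ falsely derived. If instead you weaken the tests to non-strict containment, externally tangent polygons sharing only a boundary point are wrongly classified as overlapping, which corrupts the $DR$/$PO$ split (in RCC--5 external contact must fall under $DR$). Patching this combinatorially requires additional case analysis that your sketch does not supply. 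The clean fix is to adopt the paper's witness-point formulation of $O$, which costs nothing in this framework and makes the degenerate cases disappear; with that substitution the rest of your argument (Boolean assembly of $EQ$, $PP$, $PPi$, $PO$, $DR$ from $P$ and $O$, Clark completion to obtain the negations, tightness via Proposition~\ref{prop::qual_clark}) goes through as in the paper.
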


\begin{proof}
Each RCC--5 relation may be described by means of relations $P(a,b)$ and $O(a,b)$. In the domain of convex polygons, $P(a,b)$ is true whenever all vertices of $a$ are in the interior (inside) or on the boundary of $b$, and $O(a,b)$ is true if there exists a point $p$ that is inside both $a$ and $b$. Relations of a point being inside, outside or on the boundary of a polygon can be described by polynomial expressions e.g. \cite{bhatt2011clp}. Hence, all RCC--5 relations may be described with polynomials, given a finite upper limit on the number of vertices a convex polygon can have.
\end{proof}


\begin{proposition}\label{prop::CDC}
Each relation of Cardinal Direction Calculus (CDC) \cite{frank1991qualitative} may be defined in ASPMT(QS).
\end{proposition}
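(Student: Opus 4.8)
The plan is to follow exactly the strategy used in the proofs of Propositions~\ref{prop::IA}--\ref{prop::RCC-5}: reduce each CDC relation to a conjunction of polynomial equalities and inequalities over point coordinates, and then appeal to Definition~\ref{def::qualitative}. First I would recall that CDC relates a target point $b$ to a reference point $a$ by partitioning the plane around $a$ into a fixed, finite collection of directional sectors, giving the nine relations $N, S, E, W, NE, NW, SE, SW$ together with the coincidence relation. Representing each point by its parametric coordinates through functions $x,y : Points \to \mathbb{R}$ (as in the Example following Definition~\ref{def::parametric}), every such sector is delimited purely by comparisons among the coordinates $x(a), y(a), x(b), y(b)$.

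Next I would exhibit the encoding sector by sector for the projection-based model, whose boundaries are the horizontal and vertical lines through $a$. Here each relation is a conjunction of \emph{linear} (in)equalities, for instance $NE(a,b) \leftarrow x(b) > x(a) \land y(b) > y(a)$ and $N(a,b) \leftarrow x(b) = x(a) \land y(b) > y(a)$, with coincidence being $x(b) = x(a) \land y(b) = y(a)$. These are precisely the polynomial atoms $p_i$ admitted in~(\ref{eqn::qualitative}), so each such relation already has the form required by Definition~\ref{def::qualitative}.

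The one point needing care --- and the main obstacle --- is the cone-based variant of CDC, whose sector boundaries run along the diagonals at $45^\circ, 135^\circ, 225^\circ, 315^\circ$. There a relation such as ``$b$ is north of $a$'' is naturally written $y(b) - y(a) > \lvert x(b) - x(a) \rvert$, which is not literally polynomial because of the absolute value. I would eliminate it either by a case split on the sign of $x(b)-x(a)$, or more uniformly by replacing the constraint with the conjunction $y(b) > y(a) \land (y(b)-y(a))^2 > (x(b)-x(a))^2$, a genuine polynomial inequality that is equivalent on the relevant half-plane.

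Once every cone is rewritten in this squared form, each CDC relation is again a conjunction of polynomial equalities and inequalities in the coordinates of $a$ and $b$. Since there are only finitely many relations and each fits the schema~(\ref{eqn::qualitative}), Definition~\ref{def::qualitative} applies directly and the proposition follows.
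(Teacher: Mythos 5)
Your proposal is correct, and the core strategy --- reduce each directional sector to a conjunction of polynomial (in)equalities over the coordinate functions $x,y$ and then invoke Definition~\ref{def::qualitative} --- is the same as the paper's. But the two proofs target different presentations of CDC. The paper's proof works with the region-based reading: space is divided by four lines into nine tiles, each tile is an intersection of halfplanes (hence polynomial), and a \emph{polygon} object is assigned a direction according to which tiles it topologically \emph{overlaps}, with overlap itself encoded polynomially via the existence of a shared point. You instead treat the point-to-point calculus of Frank's original paper, giving the projection-based sectors as linear constraints and, additionally, handling the cone-based variant by eliminating the absolute value through the sign condition plus squaring, $y(b)>y(a) \land (y(b)-y(a))^2 > (x(b)-x(a))^2$, which keeps the body of~(\ref{eqn::qualitative}) a pure conjunction of polynomial atoms rather than introducing a disjunctive case split --- a detail the paper does not address at all. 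What the paper's version buys is coverage of extended (polygon) objects via the overlaps relation; what yours buys is an explicit, checkable encoding of both point-based variants, which is arguably closer to the cited source. Neither argument subsumes the other, but both are legitimate instantiations of the same polynomial-encoding schema, so your proof stands as written for the point-based calculus; if extended objects are intended, you would need to add the tile-overlap step from the paper.
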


\begin{proof}
CDC relations are obtained by dividing space with 4 lines into 9 regions. Since halfplanes and their intersections may be described with polynomial expressions, then each of the 9 regions may be encoded with polynomials. A polygon object is in one or more of the 9 cardinal regions by the topological \emph{overlaps} relation between polygons, which can be encoded with polynomials (i.e. by the existence of a shared point) \cite{bhatt2011clp}.
\end{proof}

\subsection{Choice Formulas in ASPMT(QS)}


A choice formula \cite{ferraris2011stable} is defined for a predicate constant $p$ as $\Pred{Choice}(p) \equiv \forall \textbf{x} (p( \textbf{x}) \lor \neg p( \textbf{x})  ) $ and for function constant $f$ as  $\Pred{Choice}(f) \equiv \forall \textbf{x} (f( \textbf{x})=y \lor \neg f( \textbf{x})=y  ) $, where $\textbf{x}$ is a list of distinct object variables and $y$ is an object variable distinct from $\textbf{x}$. 
We use the following notation: $\{ F \}$ for $F \lor \neg F$, $\forall \textbf{x}y \{ f(\textbf{x})=y \}$ for $\Pred{Choice}(f)$ and $\forall \textbf{x} \{ p(\textbf{x}) \}$ for $\Pred{Choice}(p)$. Then, $\{ \textbf{t} = \textbf{t'} \}  $, where $\textbf{t}$ contains an intentional function constant and $\textbf{t'}$ does not, represents the default rule stating that $\textbf{t}$ has a value of $\textbf{t'}$ if there is no other rule requiring $\textbf{t}$ to take some other value.


\begin{definition}[\textbf{Spatial choice formula}] \label{def::spatial_default}
The \emph{spatial choice formula} is a rule of the form~(\ref{eqn::spatial_rule1}) or (\ref{eqn::spatial_rule2}):
\begin{equation} \label{eqn::spatial_rule2}
\{ f_n (d_1, \dots , d_n) = x \} \leftarrow \alpha_1 \land \alpha_2 \land \dots \land \alpha_k,
\end{equation}
\begin{equation} \label{eqn::spatial_rule1}
\{ Q_n (d_1, \dots , d_n) \} \leftarrow \alpha_1 \land \alpha_2 \land \dots \land \alpha_k.
\end{equation}
where $f_n$ is a parametric function, $x \in \mathbb{R}$, $Q_n$ is a qualitative spatial relation, and for each $i \in \{ 1, \dots , k \}$, $\alpha_i$ is a qualitative spatial relation or expression of a form $ \{ f_r (d_k, \dots , d_m) = y \} $ or a polynomial equation or inequality, whereas $d_i \in D_i$ is an object of spatial type $D_i$.
\end{definition}

\begin{definition}[\textbf{Spatial frame axiom}] \label{def::spatial_frame}
The \emph{spatial frame axiom} is a special case of a spatial choice formula  which states that, by default, a spatial property remains the same in the next step of a simulation. It takes the form (\ref{eqn::frame1}) or (\ref{eqn::frame2}):
\begin{equation} \label{eqn::frame1}
\{ f_n(d_1,\dots , d_{n-1} , s+1) = x \} \leftarrow f_n(d_1,\dots , d_{n-1} , s) = x,
\end{equation}
\begin{equation} \label{eqn::frame2}
\{ Q_n(d_1,\dots , d_{n-1} , s+1) \} \leftarrow Q_n(d_1,\dots , d_{n-1} , s).
\end{equation}
where $f_n$ is a parametric function, $x \in \mathbb{R}$, $Q_n$ is a qualitative spatial relation, and $s \in \mathbb{N}$ represents a step in the simulation.
\end{definition}

\begin{corollary}
One spatial frame axiom for each parametric function and qualitative spatial relation is enough to formalise the intuition that spatial properties, by default, do not change over time.
\end{corollary}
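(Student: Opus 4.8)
The plan is to handle the two forms of the axiom, (\ref{eqn::frame1}) for a parametric function $f_n$ and (\ref{eqn::frame2}) for a qualitative relation $Q_n$, by a single parallel argument, since they differ only in whether the inherited object is a function value or the truth of a predicate. First I would fix the reading of ``one axiom'': the symbols $d_1,\dots,d_{n-1}$, $x$ and $s$ in (\ref{eqn::frame1}) are free and hence universally closed, so a single instance of the schema is semantically the conjunction of all of its ground instances, one for every tuple of spatial objects, every value $x$, and every step $s \in \mathbb{N}$. The immediate consequence, which is the first half of ``enough'', is that this one schema already quantifies over all objects, values and steps, so no further frame axiom for the same $f_n$ (respectively $Q_n$) could enlarge the set of situations to which inertia applies.

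The second and substantive step is to show that, under the operator $\textbf{SM}$ of Definition~\ref{def::stable_operator}, each such instance realises \emph{default} persistence rather than unconditional persistence or an unconstrained choice. I would unfold the choice notation, writing the head $\{f_n(d_1,\dots,d_{n-1},s+1)=x\}$ as $\big(f_n(d_1,\dots,d_{n-1},s+1)=x\big)\lor\neg\big(f_n(d_1,\dots,d_{n-1},s+1)=x\big)$, and then compute the $F^{*}$ transform of the resulting implication using the clauses of Definition~\ref{def::stable_operator}. The aim is to verify that, in any interpretation $I$ satisfying the axiom, the first disjunct is supported exactly by the body $f_n(d_1,\dots,d_{n-1},s)=x$, whereas the complementary disjunct imposes no support obligation; hence the minimality requirement encoded by $\neg\exists\widehat{\textbf{c}}(\widehat{\textbf{c}}<\textbf{c}\land F^{*})$ forces $f_n(d_1,\dots,d_{n-1},s+1)$ to inherit the value $x$ whenever $f_n(d_1,\dots,d_{n-1},s)=x$ and no other rule supplies a competing supported value at step $s+1$. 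Establishing this equivalence between ``inherited unless overridden'' and the stable-model condition is the real content of the corollary, and the argument for $Q_n$ via (\ref{eqn::frame2}) is the predicate specialisation of the same computation.

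With that in hand, the conclusion assembles quickly. Collecting one axiom per parametric function and one per qualitative relation, the previous step gives soundness (every unoverridden property is carried from $s$ to $s+1$ in every stable model) and the first step gives sufficiency (the universally quantified schema leaves no case for which a second axiom would be needed), which is precisely the statement that these axioms formalise the intuition of spatial inertia.

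I expect the main obstacle to lie in the second step, specifically in the interaction between the choice head and the fact that $f_n$ occurs both in the head and, at the previous step, in the body. In the dependency graph of Definition~\ref{def::dependency} this yields an edge from $f_n$ to itself, so the frame axioms need not be tight and Theorem~\ref{the::smt} is not directly available; the argument must therefore go through $\textbf{SM}$ itself, or through a finite, step-stratified unfolding in which the per-step copies of $f_n$ are treated as distinct and the dependency becomes acyclic. Checking that the minimal interpretations under either route are exactly those that default to the previous value---and, for the functional case, that totality of $f_n$ neither spuriously forces nor forbids the inherited value---is the delicate part.
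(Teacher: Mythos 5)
The paper gives no proof of this corollary at all: it is stated immediately after Definition~\ref{def::spatial_frame} and is implicitly justified by the preceding discussion of choice formulas, where the paper simply cites \cite{ferraris2011stable} for the fact that $\{\textbf{t}=\textbf{t'}\}$ behaves as a default, and by the observation that the free variables $d_1,\dots,d_{n-1}$, $x$, $s$ in (\ref{eqn::frame1})--(\ref{eqn::frame2}) are universally closed, so one schema per symbol covers every object, value and step. Your proposal reconstructs exactly that justification and then goes considerably further by attempting to verify the default behaviour directly through the $F^{*}$ transform of Definition~\ref{def::stable_operator}; that is a legitimate and more self-contained route, and your observation that the frame axiom puts $f_n$ in both head and body, creating a self-loop in the dependency graph of Definition~\ref{def::dependency} so that Theorem~\ref{the::smt} is not directly applicable without a step-stratified reading, is a real subtlety the paper passes over in silence.

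One point where your argument is too quick: you treat the predicate case (\ref{eqn::frame2}) as ``the predicate specialisation of the same computation,'' but the two cases are not symmetric under stable model semantics. For a function constant, totality forces $f_n(d_1,\dots,d_{n-1},s+1)$ to take \emph{some} value, and the choice rule supplies support only for the inherited value $x$, so minimality genuinely forces inheritance absent a competing supported value. For a predicate, minimization prefers falsity, and $\{Q_n(\dots,s+1)\}\leftarrow Q_n(\dots,s)$ merely \emph{permits} persistence rather than forcing it: an interpretation in which $Q_n$ simply becomes false at $s+1$ also satisfies the minimality condition, since falsity needs no support. The claim that one such axiom ``is enough'' for qualitative relations therefore rests on the additional fact that in ASPMT(QS) the truth of $Q_n$ is tied to the parametric functions via the equivalences of Definition~\ref{def::qualitative} after Clark completion, so persistence of the parameters induces persistence of the relations. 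If you want the corollary to stand for relations independently of that coupling, you need to say so explicitly; as written, your soundness claim (``every unoverridden property is carried from $s$ to $s+1$ in every stable model'') is correct for parametric functions but not, on its own, for qualitative relations.
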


The combination of spatial reasoning with stable model semantics and arithmetic over the reals enables the operationalisation of a range of novel features within the context of dynamic spatial reasoning. We present concrete examples of such features in Section~\ref{sec::tests}.


\section{System implementation} \label{sec::implementation}

We present our implementation of ASPMT(QS) that builds on \textsc{aspmt2smt} \cite{bartholomew2014system} --  a compiler translating a tight fragment of ASPMT into SMT instances. Our system consists of an additional module for spatial reasoning and \textsc{Z3} as the SMT solver. As our system operates on a tight fragment of ASPMT, input programs need to fulfil certain requirements, described in the following section. As output, our system either produces the stable models of the input programs, or states that no such model exists.


\subsection{Syntax of Input Programs}

The input program to our system needs to be $f$-$plain$ to use Theorem 1 from \cite{bartholomew2012stable}.

\begin{definition}[\textbf{$\textbf{f-plain}$ formula}]
Let $f$ be a function constant. A first--order formula is called $f$-$plain$ if each atomic formula:
\begin{itemize}
\item does not contain $f$, or
\item is of the form $f(\textbf{t}) = u$, where $\textbf{t}$ is a tuple of terms not containing $f$, and $u$ is a term not containing $f$.
\end{itemize}
\end{definition}

Additionally, the input program needs to be \emph{av-separated}, i.e. no variable occurring in an argument of an uninterpreted function is related to the value variable of another uninterpreted function via equality \cite{bartholomew2014system}. The input program is divided into declarations of:
\begin{itemize}
\item $\Pred{sorts}$ (data types);
\item $\Pred{objects}$ (particular elements of given types);
\item $\Pred{constants}$ (functions);
\item $\Pred{variables}$ (variables associated with declared types).
\end{itemize}
The second part of the program consists of clauses. ASPMT(QS) supports:
\begin{itemize}
\item connectives: $\Pred{\&}$, $\Pred{|}$, $\Pred{not}$, $\Pred{->}$, $\Pred{<-}$, and
\item arithmetic operators: \texttt{<}, \texttt{<=}, \texttt{>=}, \texttt{>}, \texttt{=}, \texttt{!=}, \texttt{+}, \texttt{=}, \texttt{*}, with their usual meaning.
\end{itemize}   
Additionally, ASPMT(QS) supports the following as native / first-class entities:


\begin{itemize}
\item $\Pred{sorts}$ for geometric objects types, e.g., $\Pred{point}$, $\Pred{segment}$, $\Pred{circle}$, $\Pred{triangle}$;
\item parametric functions describing objects parameters e.g., $x(\Pred{point})$, $r(\Pred{circle})$;
\item qualitative relations, e.g., $\Pred{rccEC}(\Pred{circle},\Pred{circle})$, $\Pred{coincident}(\Pred{point},\Pred{circle})$.
\end{itemize}

\noindent\bul\textbf{Example 1: combining topology and size}\quad  Consider a program describing three circles $a$, $b$, $c$ such that $a$ is discrete from $b$,  $b$ is discrete from $c$, and $a$ is a proper part of $c$, declared as follows:


 \includegraphics[]{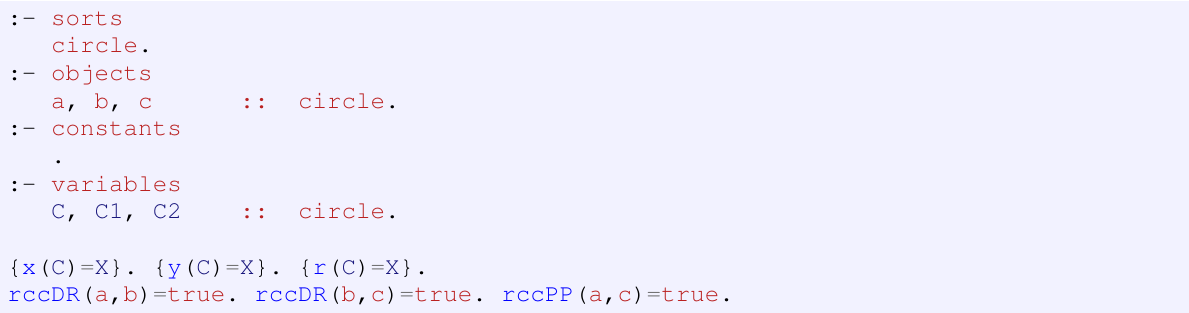}
 
%

\noindent ASPMT(QS) checks if the spatial relations are satisfiable. In the case of a positive answer, a parametric model and computation time are presented. The output of the above mentioned program is:

 \includegraphics[]{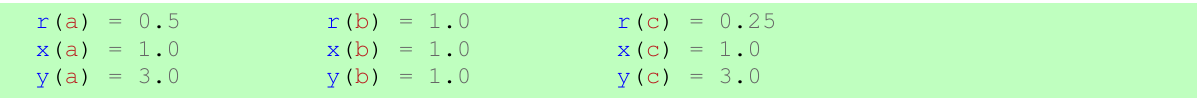}


\noindent This example demonstrates that ASPMT(QS) is capable of computing \emph{composition tables}, in this case the RCC--5 table for circles \cite{randell1992spatial}. Now, consider the addition of a further constraint to the program stating that circles $a$, $b$, $c$ have the same radius:

 \includegraphics[]{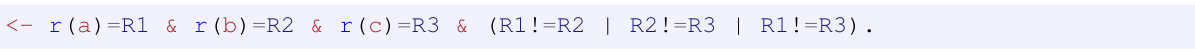}


\noindent This new program is an example of combining different \emph{types} of qualitative information, namely topology and size, which is a non-trivial research topic within the relation algebraic spatial reasoning community; relation algebraic-based solvers such as GQR \cite{gantner2008gqr,DBLP:conf/ijcai/WolflW09} will not correctly determine inconsistencies in general for arbitrary combinations of different types of relations (orientation, shape, distance, etc.). In this case, ASPMT(QS) correctly determines that the spatial constraints are inconsistent:
 
\includegraphics[]{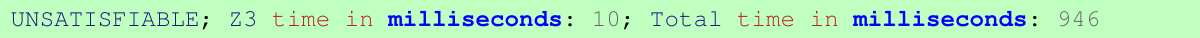}



\noindent\bul\textbf{Example 2: combining topology and relative orientation}\quad  Given three circles $a$, $b$, $c$ let $a$ be proper part of $b$,  $b$ discrete from $c$, and $a$ in contact with $c$, declared as follows: 


\begin{figure}[ht]
    \centering
    \begin{subfigure}[b]{0.5\textwidth}
        \centering
        \includegraphics[width=0.4\textwidth]{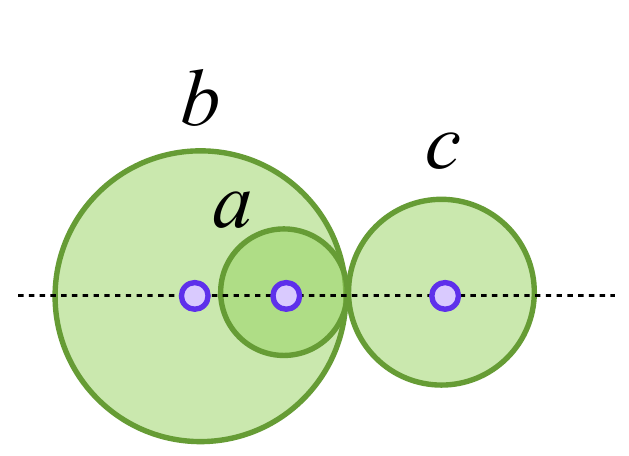}
        \caption{}
        \label{fig:topo-ori-1}
    \end{subfigure}%
    \begin{subfigure}[b]{0.5\textwidth}
        \centering
        \includegraphics[width=0.4\textwidth]{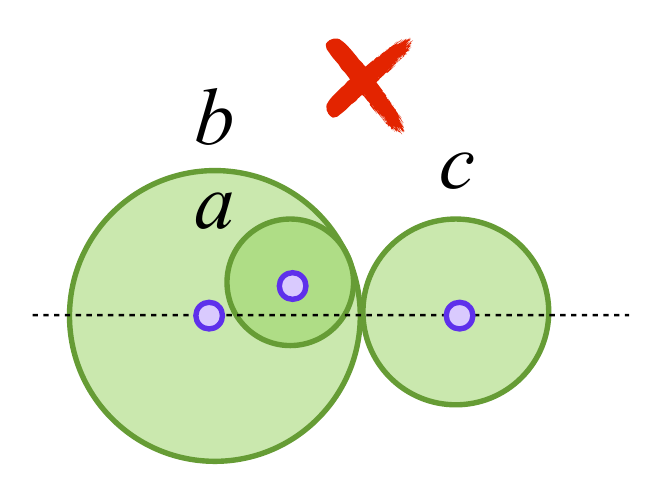}
        \caption{}
        \label{fig:topo-ori-2}
    \end{subfigure}
    \caption{Reasoning about consistent and refinement by combining topology and relative orientation.}
    \label{fig:topo-ori}
    \vspace{-10pt}
\end{figure}


%

 \includegraphics[]{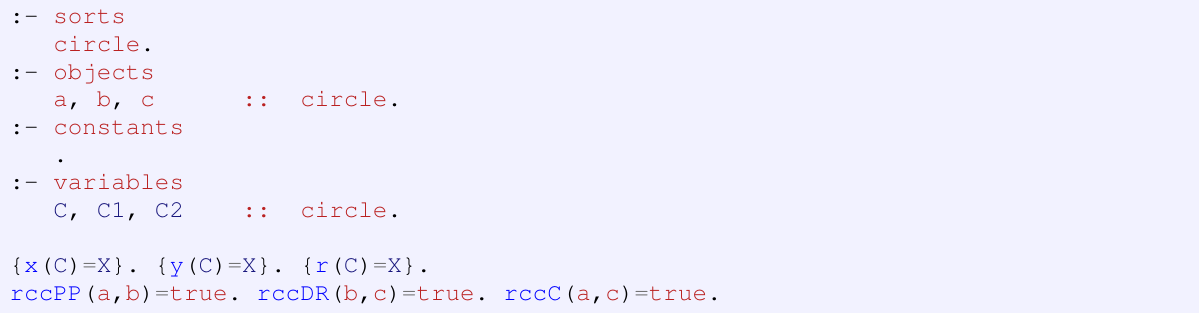}

%

\noindent Given this basic qualitative information, ASPMT(QS) is able to refine the topological relations to infer that (Figure~\ref{fig:topo-ori-1}): i) $a$ must be a \emph{tangential proper part} of $b$ ii) both $a$ and $b$ must be \emph{externally connected} to $c$.

 \includegraphics[]{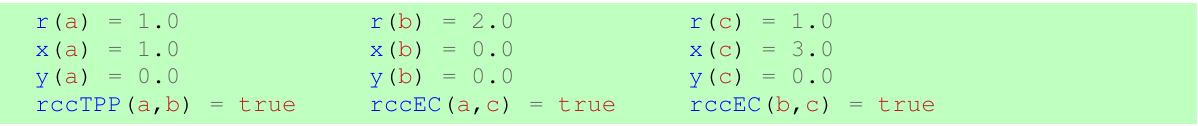}


\noindent We then add an additional constraint that the centre of $a$ is \emph{left of} the segment between the centres $b$ to $c$. 

 \includegraphics[]{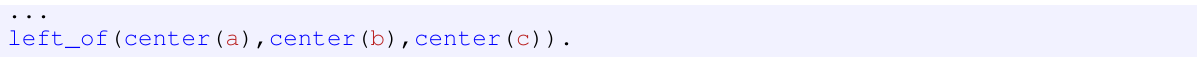}

\noindent ASPMT(QS) determines that this is inconsistent, i.e., the centres must be \emph{collinear} (Figure\;\ref{fig:topo-ori-2}).

 \includegraphics[]{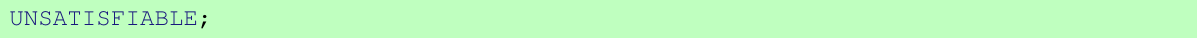}


\section{Empirical Evaluation and Examples} \label{sec::tests}

In this section we present an empirical evaluation of ASPMT(QS) in comparison with other existing spatial reasoning systems. The range of problems demonstrate the unique, non-monotonic spatial reasoning features that ASPMT(QS) provides beyond what is possible using other currently available systems.
Table~\ref{tab::cumulative} presents run times obtained by Clingo -- an ASP grounder and solver \cite{gebser2014clingo}, GQR -- a binary constraint calculi reasoner \cite{gantner2008gqr}, CLP($\mathcal{QS}$) -- a declarative spatial reasoning system \cite{bhatt2011clp} and our ASPMT(QS) implementation. Tests were performed on an Intel Core 2 Duo 2.00 GHZ CPU with 4 GB RAM running Ubuntu 14.04. The polynomial encodings of the topological relations have not been included here for space considerations.


\begin{table}[h]
  \vspace{-20pt}
\footnotesize
\begin{center}
\caption{\textit{\small Cumulative results of performed tests. ``---'' indicates that the problem can not be formalised, ``$^{I}$'' indicates that indirect effects can not be formalised, ``$^{D}$'' indicates that default rules can not be formalised.}
}
\label{tab::cumulative}
\begin{tabular}{|c||r|r|r|r|}
\hline
Problem & \multicolumn{1}{|c|}{Clingo} & \multicolumn{1}{|c|}{GQR} & \multicolumn{1}{|c|}{CLP($\mathcal{QS}$)} & \multicolumn{1}{|c|}{ASPMT(QS)}  \\
\hline
\hline
\emph{Growth} & $0.004$s$^{I}$ & $0.014$s$^{I,D}$ & $1.623$s$^{D}$ & $0.396$s  \\
\hline
\emph{Motion}  & $0.004$s$^{I}$ & $0.013$s$^{I,D}$ &  $0.449$s$^{D}$  & $15.386$s  \\
\hline
\emph{Attach I}  & $0.008$s$^{I}$ & \multicolumn{1}{|c|}{---} & $3.139$s$^{D}$  & $0.395$s  \\
\hline
\emph{Attach II} & \multicolumn{1}{|c|}{---}  & \multicolumn{1}{|c|}{---} & $2.789$s$^{D}$  & $0.642$s  \\
\hline
\end{tabular}
\end{center}
  \vspace{-20pt}
\end{table}

\subsection{Ramification Problem}
\label{sec::ramification}

\begin{wrapfigure}{r}{0.52\textwidth}
  \begin{center}
  \vspace{-47pt}
\resizebox{0.5\textwidth}{!}{
\begin{tikzpicture}

\node[draw=none,fill=none] at (-2.5,0.7)  {$S_0:$};
\node[draw=none,fill=none] at (-2.5,-1.3)  {$S_1:$};

\node (0) at (1,0.7) [line width=1.2pt,rounded corners=2pt, draw,thick,minimum width=2cm,minimum height=1.4cm] {};
\node[draw=black, circle, minimum size=11mm, inner sep=0pt,outer sep=0] at (0.7,0.7) {$ $};
\node[draw=black, fill=LightBlue, circle, minimum size=5mm, inner sep=0pt,outer sep=0] at (0.7,0.8) {$a$};
\node[draw=black, fill=LightGreen, circle, minimum size=6mm, inner sep=0pt,outer sep=0] at (1.55,0.7) {$c$};
\node[draw=none,fill=none] at (0.7,0.35)  {$b$};

\node (1) at (-0.5,-1.3) [line width=1.2pt,rounded corners=2pt, draw,thick,minimum width=2cm,minimum height=1.4cm] {};
\node[draw=black, fill=LightBlue, circle, minimum size=11mm, inner sep=0pt,outer sep=0] at (-0.8,-1.3) {$a=b$};
\node[draw=black, fill=LightGreen, circle, minimum size=6mm, inner sep=0pt,outer sep=0] at (0.05,-1.3) {$c$};

\node (or1) at (2,-1.3) [line width=1.2pt,rounded corners=2pt, draw,thick,minimum width=2cm,minimum height=1.4cm] {};
\node[draw=black, circle, minimum size=11mm, inner sep=0pt,outer sep=0] at (1.7,-1.3) {};
\node[draw=black, fill=LightBlue, circle, minimum size=5mm, inner sep=0pt,outer sep=0] at (1.7,-1.0)  {$a$};
\node[draw=black, fill=LightGreen, circle, minimum size=6mm, inner sep=0pt,outer sep=0] at (2.55,-1.3) {$c$};
\node[draw=none,fill=none] at (1.7,-1.65)  {$b$};

\node (or2) at (4.5,-1.3) [line width=1.2pt,rounded corners=2pt, draw,thick,minimum width=2cm,minimum height=1.4cm] {};
\node[draw=black, circle, minimum size=11mm, inner sep=0pt,outer sep=0] at (4.2,-1.3) {};
\node[draw=black, fill=LightBlue, circle, minimum size=5mm, inner sep=0pt,outer sep=0] at (4.5,-1.3)  {$a$};
\node[draw=black, fill=LightGreen, circle, minimum size=6mm, inner sep=0pt,outer sep=0] at (5.05,-1.3) {$c$};
\node[draw=none,fill=none] at (3.9,-1.3)  {$b$};

\node[dashed, rounded corners=2pt, fit=(or1)(or2), draw] {};

\draw[->, line width=1.2pt, rounded corners=2pt] (0,0.7) -- (-0.75,0.7) -- (-0.75,-0.6);
\draw[ line width=1.2pt, rounded corners=2pt] (2,0.7) -- (3.25,0.7) -- (3.25,0.3);

\draw[->, line width=1.2pt, rounded corners=2pt] (3.25,0.3) -- (2.25,-0.4);
\draw[->, line width=1.2pt, rounded corners=2pt] (3.25,0.3) -- (4.25,-0.4);
\node[draw=none,fill=none] at (3.25,-0.1)  {OR};

\node at (-1.75,0.2) [] {$growth(a,0)$};
\node at (4.5,0.2) [] {$motion(a,0)$};







\end{tikzpicture}
}
\caption{Indirect effects of $growth(a,0)$ and $motion(a,0)$ events.}
\label{fig::ramifications}
\end{center}
  \vspace{-29pt}
\end{wrapfigure}
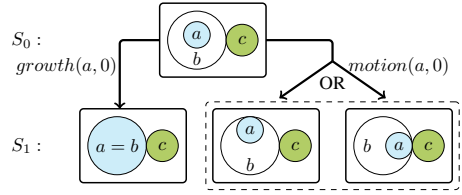

The following two problems, \emph{Growth} and \emph{Motion}, were introduced in \cite{bhatt:aaai08}. Consider the initial situation $S_0$ presented in Figure~\ref{fig::ramifications}, consisting of three cells: $a$, $b$, $c$, such that $a$ is a non-tangential proper part of $b$: $\Pred{rccNTPP}(a,b,0)$, and $b$ is externally connected to $c$: $\Pred{rccEC}(b,c,0)$.






\smallskip

$\triangleright$ \emph{Growth.} Let $a$ grow in step $S_0$; the event $\Pred{growth}(a,0)$ occurs and leads to a successor situation $S_1$. The direct effect of $growth(a,0)$ is a change of a relation between $a$ and $b$ from $\Pred{rccNTPP}(a,b,0)$ to $\Pred{rccEQ}(a,b,1)$ (i.e. $a$ is equal to $b$). No change of the relation between $a$ and $c$ is directly stated, and thus we must derive the relation $\Pred{rccEC}(a,c,1)$ as an indirect effect. 

\smallskip

$\triangleright$ \emph{Motion.} Let $a$ move in step $S_0$; the event $\Pred{motion}(a,0)$ leads to a successor situation $S_1$. The direct effect is a change of the relation $\Pred{rccNTPP}(a,b,0)$ to $\Pred{rccTPP}(a,b,1)$ ($a$ is a tangential proper part of $b$). In the successor situation $S_1$ we must determine that the relation between $a$ and $c$ can only be either $\Pred{rccDC}(a,c,1)$ or $\Pred{rccEC}(a,c,1)$.

\smallskip



GQR provides no support for domain-specific reasoning, and thus we encoded the problem as two distinct qualitative constraint networks (one for each simulation step) and solved them independently i.e. with no definition of \emph{growth} and \emph{motion}. Thus, GQR is not able to produce any additional information about indirect effects. As Clingo lacks any mechanism for analytic geometry, we implemented the RCC8 composition table and thus it inherits the incompleteness of relation algebraic reasoning. While CLP(QS) facilitates the modelling of domain rules such as \emph{growth}, there is no native support for default reasoning and thus we forced $b$ and $c$ to remain unchanged between simulation steps, otherwise all combinations of spatially consistent actions on $b$ and $c$ are produced without any preference (i.e. leading to the frame problem). 

In contrast, ASPMT(QS) can express spatial inertia, and derives indirect effects directly from spatial reasoning: in the \emph{Growth} problem ASPMT(QS) abduces that $a$ has to be concentric with $b$ in $S_0$ (otherwise a \emph{move} event would also need to occur). Checking global consistency of scenarios that contain interdependent spatial relations is a crucial feature that is enabled by a support polynomial encodings and is provided only by CLP(QS) and ASPMT(QS).

\subsection{Geometric Reasoning and the Frame Problem}

In problems \emph{Attachment I} and \emph{Attachment II} the initial situation $S_0$ consists of three objects (circles), namely $\Const{car}$, $\Const{trailer}$ and $\Const{garage}$ as presented in Figure~\ref{fig::car}. Initially, the $\Const{trailer}$ is attached to the $\Const{car}$: $\Pred{rccEC}(\Const{car},\Const{trailer},0)$,  $\Pred{attached}(\Const{car},\Const{trailer},0)$. The successor situation $S_1$ is described by $\Pred{rccTPP}(\Const{car},\Const{garage},1)$. The task is to infer the possible relations between the trailer and the garage, and the necessary actions that would need to occur in each scenario.

There are two domain-specific actions: the car can move, $\Pred{move}(\Const{car},X)$, and the trailer can be detached, $\Pred{detach}(\Const{car},\Const{trailer},X)$ in simulation step $X$. Whenever the $\Const{trailer}$ is attached to the $\Const{car}$, they remain $\Pred{rccEC}$. The $\Const{car}$ and the $\Const{trailer}$ may be either completely outside or completely inside the $\Const{garage}$. 



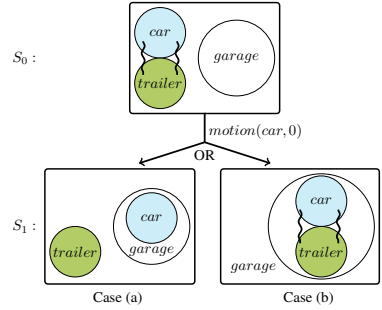
\begin{wrapfigure}{r}{0.40\textwidth}
  \begin{center}
  \vspace{-33pt}
\resizebox{0.41\textwidth}{!}{
\begin{tikzpicture}

\node[draw=none,fill=none] at (-2.7,1)  {$S_0:$};

\node (garage0) [draw=black, circle, minimum size=18mm, inner sep=0pt,outer sep=0] at (2.3,1) {$garage$};
\node (car0) [draw=black, fill=LightBlue, circle, minimum size=12mm, inner sep=0pt,outer sep=0] at (0.5,1.6) {$car$};
\node (trailer0) [draw=black, fill=LightGreen, circle, minimum size=12mm, inner sep=0pt,outer sep=0] at (0.5,0.4) {$trailer$};
\node[line width=1.0pt, rounded corners=2pt, fit=(garage0)(car0)(trailer0), draw] {};

\draw[-,very thick,decorate,decoration={snake,amplitude=1}] (0.1,1.4) -- (0.1,0.6);
\draw[-,very thick,decorate,decoration={snake,amplitude=1}] (0.9,1.4) -- (0.9,0.6);

\node[draw=none,fill=none] at (-2.7,-3)  {$S_1:$};

\node (garage3) [draw=white, circle, minimum size=25mm, inner sep=0pt,outer sep=0] at (-0.3,-3) {$ $};

\node (garage1) [draw=black, circle, minimum size=18mm, inner sep=0pt,outer sep=0] at (0.3,-3) {$ $};
\node (car1) [draw=black, fill=LightBlue, circle, minimum size=12mm, inner sep=0pt,outer sep=0] at (0.3,-2.8) {$car$};
\node (trailer1) [draw=black, fill=LightGreen, circle, minimum size=12mm, inner sep=0pt,outer sep=0] at (-1.5,-3.6) {$trailer$};
\node[line width=1.0pt, rounded corners=2pt, fit=(garage1)(car1)(trailer1)(garage3), draw] {};
\node[draw=none,fill=none] at (0.3,-3.6)  {$garage$};


\node (garage1) [draw=black, circle, minimum size=25mm, inner sep=0pt,outer sep=0] at (4.3,-3) {$ $};
\node (car1) [draw=black, fill=LightBlue, circle, minimum size=12mm, inner sep=0pt,outer sep=0] at (4.3,-2.4) {$car$};
\node (trailer1) [draw=black, fill=LightGreen,  circle, minimum size=12mm, inner sep=0pt,outer sep=0] at (4.3,-3.6) {$trailer$};
\node (garage3) [draw=none,fill=none] at (2.7,-4.0) {$garage$};
\node[line width=1.0pt, rounded corners=2pt, fit=(garage1)(car1)(garage3)(trailer1), draw] {};

\draw[-,very thick,decorate,decoration={snake,amplitude=1}] (3.8,-2.6) -- (3.8,-3.4);
\draw[-,very thick,decorate,decoration={snake,amplitude=1}] (4.7,-2.6) -- (4.7,-3.4);

\draw[ line width=1.2pt, rounded corners=2pt] (1.55,-0.3) -- (1.55,-1.0);

\draw[->, line width=1.2pt, rounded corners=2pt] (1.55,-1.0) -- (0.0,-1.5);
\draw[->, line width=1.2pt, rounded corners=2pt] (1.55,-1.0) -- (3.1,-1.5);
\node[draw=none,fill=none] at (1.55,-1.3)  {OR};

\node at (2.75,-0.75) [] {$motion(car,0)$};

\node at (-0.5,-4.7) [] {Case (a)};
\node at (4,-4.7) [] {Case (b)};

\end{tikzpicture}
}
\caption{Non-monotonic reasoning with additional geometric information.}
\label{fig::car}
\end{center}
  \vspace{-30pt}
\end{wrapfigure}

$\triangleright$ \emph{Attachment I.} Given the available topological information, we must infer that there are two possible solutions (Figure.~\ref{fig::car}); (a) the $\Const{car}$ was detached from the $\Const{trailer}$ and then moved into the $\Const{garage}$:
(b) the $\Const{car}$, together with the $trailer$ attached to it, moved into the $\Const{garage}$:

$\triangleright$ \emph{Attachment II.} We are given additional geometric information about the objects' size: $r(car)=2$, $r(trailer)=2$ and $r(garage)=3$. Case (b) is now inconsistent, and we must determine that the only possible solution is (a).

These domain-specific rules require default reasoning: ``\emph{typically} the $\Const{trailer}$ remains in the same position'' and ``\emph{typically} the $\Const{trailer}$ remains attached to the $\Const{car}$''. The later default rule is formalised in ASPMT(QS) by means of the spatial defaul.:
The formalisation of such rules addresses the frame problem. GQR is not capable of expressing the domain-specific rules for detachment and attachment in \emph{Attachment I} and \emph{Attachment II}. Neither GQR nor Clingo are capable of reasoning with a combination of topological and numerical information, as required in \emph{Attachment II}. As CLP(QS) cannot express default rules, we can not capture the notion that, for example, the trailer should typically remain in the same position unless we have some explicit reason for determining that it moved; once again this leads to an exhaustive enumeration of all possible scenarios without being able to specify preferences, i.e. the frame problem, and thus CLP(QS) will not scale in larger scenarios.

The results of the empirical evaluation show that ASPMT(QS) is the only system that is capable of (a) non-monotonic spatial reasoning, (b) expressing domain-specific rules that also have spatial aspects, and (c) integrating both qualitative and numerical information. Regarding the greater execution times in comparison to CLP(QS), we have not yet implemented any optimisations with respect to spatial reasoning; this is one of the directions of future work.

\section{Conclusions} \label{sec::conclusions}

We have presented ASPMT(QS),  a novel approach for reasoning about spatial change within a KR paradigm. By integrating dynamic spatial reasoning within a KR framework, namely answer set programming (modulo theories), our system can be used to model behaviour patterns that characterise high-level processes, events, and activities as identifiable with respect to a general characterisation of commonsense \emph{reasoning about space, actions, and change} \cite{Bhatt:RSAC:2012,bhatt:scc:08}. ASPMT(QS) is capable of sound and complete spatial reasoning, and combining qualitative and quantitative spatial information when reasoning non-monotonically; this is due to the approach of encoding spatial relations as polynomial constraints, and solving using SMT solvers with the theory of real nonlinear arithmetic. We have demonstrated that no other existing spatial reasoning system is capable of supporting the key non-monotonic spatial reasoning features (e.g., spatial inertia, ramification) provided by ASPMT(QS) in the context of a mainstream knowledge representation and reasoning method, namely, answer set programming.

\subsubsection*{Acknowledgments.} This research is partially supported by: (a) the Polish National Science Centre grant 2011/02/A/HS1/0039; and (b). the DesignSpace Research Group \texttt{www.design-space.org}.

%
%
%
%
%
%
%

\bibliographystyle{splncs03}
\bibliography{ijcai15}

\end{document}